\documentclass[10pt,letterpaper]{article}

\usepackage[letterpaper,margin=0.72in]{geometry}
\usepackage[T1]{fontenc}
\usepackage[utf8]{inputenc}
\usepackage{amsmath,amssymb,amsthm}
\usepackage{booktabs}
\usepackage{graphicx}
\usepackage{placeins}
\usepackage{microtype}
\usepackage{times}
\usepackage{tabularx}
\usepackage{xcolor}
\usepackage[round,authoryear]{natbib}
\usepackage{xurl}
\usepackage{hyperref}
\hypersetup{hidelinks}
\graphicspath{{figures/}}

\newtheorem{theorem}{Theorem}
\newtheorem{proposition}{Proposition}

\newcommand{\R}{\mathbb{R}}

\newcommand{\calG}{\mathcal{G}}

\newcommand{\expe}{\mathrm{e}}

\title{Jordan-RoPE: Non-Semisimple Relative Positional Encoding via Complex Jordan Blocks}
\author{
Yaobo Zhang\\
School of Physics, Ningxia University\\
Yinchuan, 750021, China\\
\texttt{yaobozhang@zju.edu.cn}
}
\date{}

\newenvironment{wideabstract}{%
  \begin{center}\bfseries Abstract\end{center}%
  \small\noindent
}{\par}

\begin{document}
\twocolumn[
\maketitle

\begin{wideabstract}
Relative positional encodings determine the primitive functions of query-key lag that are available to attention.
We study a non-semisimple rotary extension in which a complex RoPE eigenvalue and a nilpotent response live in the same defective Jordan block.
The resulting exact one-parameter representation turns RoPE into a finite frequency-jet basis: the order-two block supplies the first tangent mode \(d\,\expe^{i\omega d}\), while order-\(m\) chains supply modes \(d^r\expe^{i\omega d}\) for \(r=0,\ldots,m-1\).
Geometrically, complex Jordan blocks turn rotary frequencies into infinitesimal frequency jets rather than adding an independent distance axis.
Because these blocks are non-orthogonal, we spell out the contragredient query action required to recover a pure relative score, and we separate Exact/raw and Scaled-exact Jordan-RoPE representations from Stabilized Jordan-RoPE approximations.
Fixed-kernel probes and a Jordan-friendly synthetic language-model task confirm that the coupled basis is useful when the target contains distance-modulated phase interactions.
High-order probes verify the finite-jet hierarchy, while MusicNet and WikiText experiments are treated as task-selective and boundary evidence rather than broad performance claims: MusicNet gives a natural positive case for a Scaled-exact order-three variant, and small WikiText runs suggest that weak jet corrections help mainly when composed with a recency bias such as ALiBi.
\end{wideabstract}
\vspace{1em}
]

\section{Introduction}

Attention uses inner products to decide which previous representations should influence the current token \citep{vaswani2017attention}.
Without positional modification, however, this inner product has no intrinsic notion of order.
A relative positional encoding therefore does more than label tokens: it specifies which functions of the query-key lag can enter the primitive attention logit \citep{shaw2018self,dai2019transformerxl}.
This point becomes especially important in long-context modeling, where the same positional rule is evaluated far outside the range of lags seen during training.

Many successful mechanisms can be understood from this perspective.
Relative position representations make attention depend explicitly on pairwise displacement.
RoPE turns absolute rotations of queries and keys into a relative phase \citep{su2021roformer}.
ALiBi adds a monotone distance penalty directly to the logits \citep{press2022alibi}.
A later line of work modifies, interpolates, or rescales rotary phases to improve length extrapolation \citep{sun2022length,chen2023position,peng2023yarn,ding2024longrope}.
Although these methods differ operationally, they all make a choice about the analytic form of the relative-position kernel.

A complementary viewpoint is algebraic \citep{kogkalidis2023algebraic,ostmeier2024liere,puranik2026group,zhang2025grape}.
If the positional modification is linear, translation-invariant, and continuous, then the relative operator is naturally a one-parameter matrix action.
This observation explains why rotations, exponential decays, and additive distance biases appear so persistently.
It also exposes a less explored possibility: the generator need not be diagonalizable.
In a defective block, the matrix exponential contains nilpotent polynomial factors.
For real eigenvalues this gives linear or higher-order shear.
For complex eigenvalues it gives something more structured: a rotary phase coupled to a polynomial response.

\paragraph{Motivating observation.}
The starting point of this note is a small but suggestive gap in the group-theoretic view of positional encoding.
A recent group-theoretic discussion observed that, under linearity, translation invariance, and continuity, relative positional encodings are naturally organized by one-parameter matrix groups \citep{puranik2026group}.
In that classification, diagonalizable generators lead to familiar exponential, damping, and rotary mechanisms, while defective generators lead to polynomial factors through Jordan blocks.
The real defective case already gives a linear-in-time shear and can be related to ALiBi after augmenting the bilinear form.
The complex defective case, however, is less developed: the simplest real realization involves a doubled conjugate pair and a nontrivial Jordan chain.

This observation suggests a different interpretation of defective positional encoding.
Rather than treating it as a pathological matrix case, we view it as a non-normal linear dynamical system inserted into the attention kernel.
The nilpotent part of a Jordan block does not merely add a distance feature; when coupled to a complex rotary eigenvalue, it generates the tangent mode \(d\,\expe^{i\omega d}\).
The question studied here is therefore deliberately narrow: what new primitive relative-position features arise from the simplest complex Jordan block, and can a Transformer use them when the task rewards distance-modulated phase structure?

This note follows that non-semisimple corner.
We place the rotary complex eigenvalue and a nilpotent response in the same defective Jordan block.
The resulting relative operator contains primitive oscillatory-polynomial features such as
\[
d\,\expe^{-\gamma d}\cos(\omega d),\qquad d\,\expe^{-\gamma d}\sin(\omega d),
\]
which realize a distance-modulated phase basis at the pre-softmax bilinear-kernel level.
Equivalently, the nilpotent channel supplies the frequency-tangent feature
\[
d\,\expe^{i\omega d}=\frac{1}{i}\partial_\omega \expe^{i\omega d}.
\]
The order-two construction is therefore the first nontrivial member of a finite frequency-jet hierarchy: higher-order complex Jordan chains provide primitive modes \(d^r\expe^{i\omega d}\).

\paragraph{Geometric intuition: infinitesimal thickening of rotary frequencies.}
RoPE can be viewed as sampling points on the Fourier-character curve
\[
\omega\mapsto \chi_\omega,\qquad \chi_\omega(d)=\expe^{i\omega d}.
\]
A complex Jordan block does not attach an independent distance coordinate to this curve.
Instead, it replaces a reduced frequency point by a nilpotent thickening.
Formally, if \(\varepsilon^m=0\), then evaluating the Fourier character at an infinitesimal neighborhood of \(\omega\) gives
\[
\expe^{i(\omega+\eta\varepsilon)d}
=
\expe^{i\omega d}
\sum_{r=0}^{m-1}\frac{(i\eta d)^r}{r!}\varepsilon^r .
\]
Up to a harmless rescaling of the nilpotent coordinate, this is the same finite chain as the undamped Jordan operator
\[
G_m^{(\gamma=0)}(d)
=
\expe^{i\omega d}
\sum_{r=0}^{m-1}\frac{(\eta d)^r}{r!}N_m^r .
\]
Thus order-\(m\) Jordan-RoPE realizes the \((m-1)\)-jet of the Fourier character at \(\omega\).
In this sense, RoPE samples reduced spectral points, while Jordan-RoPE samples non-reduced infinitesimal neighborhoods.
The direct-sum baseline adds an external distance axis; the Jordan block instead thickens the rotary frequency itself.

We do not introduce group-theoretic positional encoding as a new paradigm, nor do we claim that nilpotent positional mechanisms are new.
Instead, we ask what happens when the rotary complex eigenvalue and the nilpotent response are coupled inside one defective block.
This distinction matters: a large downstream network may learn products of separate channels, but the positional encoding itself supplies a different primitive logit basis.

Our contributions are structural and experimental:
\begin{itemize}
    \item We identify the order-two complex Jordan block as the minimal non-semisimple extension of rotary relative positional encoding in which phase and nilpotent response are coupled rather than placed in separate subspaces.
    \item We formulate Exact Jordan-RoPE as a one-parameter relative representation and give its real block implementation. Because the block is non-orthogonal, we spell out the contragredient query action needed to recover a pure relative attention score.
    \item We separate the exact representation from Stabilized Jordan-RoPE. The bounded-\(\tau\) implementation preserves the local Jordan response but breaks the group law, while Scaled-exact Jordan-RoPE preserves the one-parameter structure with controlled damping.
    \item We test the induced basis on fixed kernel probes, structured sequence probes, a synthetic language-model task with teacher kernel \(K(d)=(d/L)\cos(\omega d)\), and a small WikiText-103 byte language model.
    \item We add a high-order frequency-jet diagnostic showing that order-\(m\) complex Jordan blocks realize the finite jet basis \(\{d^r\expe^{i\omega d}\}_{r=0}^{m-1}\). Beyond fixed-basis and synthetic high-jet probes, a small music-token transfer sweep identifies MusicNet as a task-selective natural positive case for a Scaled-exact order-three variant, and auxiliary WikiText runs show that normalized high-jet spectra help mainly as weak corrections composed with ALiBi rather than as standalone replacements for recency bias.
\end{itemize}

The evidence should be read structurally rather than as a broad performance claim.
At the representation level, Exact Jordan-RoPE is a non-semisimple one-parameter relative positional representation when implemented with the contragredient query action.
At the basis level, it directly provides the primitive tangent feature \(d\,\expe^{i\omega d}\), instead of separating phase and distance into independent channels.
At the task level, when the teacher kernel contains \((d/L)\cos(\omega d)\), Stabilized Jordan-RoPE extrapolates better than RoPE and direct-sum baselines.
Natural language modeling is treated as a boundary test rather than the main evidence.

\section{Related Work and Positioning}

\begin{figure*}[t]
    \centering
    \includegraphics[width=\linewidth]{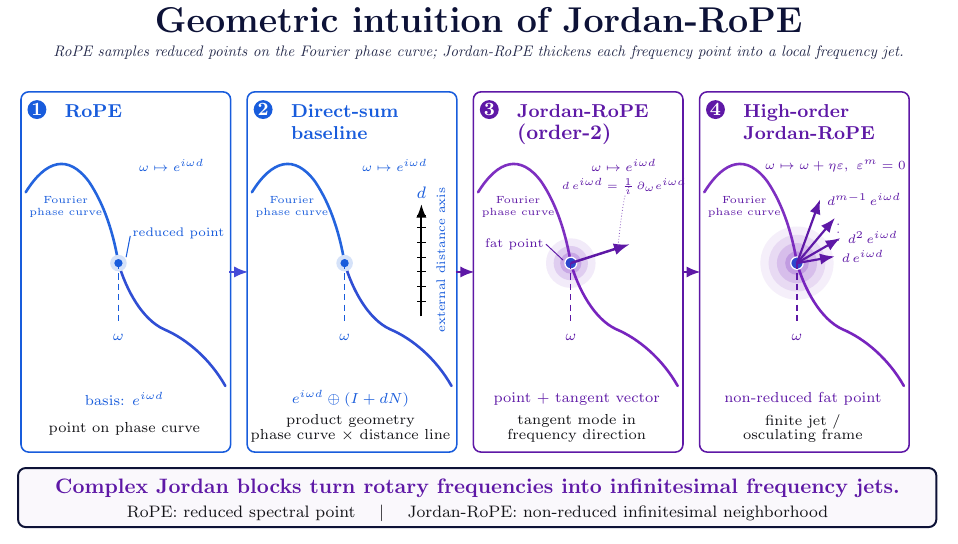}
    \caption{Geometric intuition of Jordan-RoPE. RoPE samples reduced points on the Fourier-character curve. A direct-sum baseline adds an external distance coordinate. A complex Jordan block instead replaces a rotary frequency by a non-reduced infinitesimal neighborhood, producing the tangent mode \(d\,\expe^{i\omega d}\) and, at order \(m\), the finite jet basis \(\{d^r\expe^{i\omega d}\}_{r=0}^{m-1}\).}
    \label{fig:method-schematic}
\end{figure*}

Let \(i\) be a query position and \(j\le i\) a key position in causal attention.
We write the causal lag as
\[
d=i-j\ge 0.
\]
This convention is important: damping terms are written as \(\expe^{-\gamma d}\), so reversing the sign would make distant past tokens grow rather than decay.
The sign of the oscillatory phase can be absorbed by \(\omega\mapsto -\omega\), but the damping sign cannot.

\paragraph{Relative positional encoding.}
Relative positional representations make attention scores depend on pairwise displacement rather than only absolute token index \citep{shaw2018self}.
Transformer-XL \citep{dai2019transformerxl} combines segment-level recurrence with relative positional terms to model dependencies beyond a fixed segment.
RoPE applies orthogonal block rotations to queries and keys so that \(R_i^\top R_j=R_{j-i}\), turning absolute rotations into a relative phase \citep{su2021roformer}.
ALiBi adds a head-dependent linear penalty, typically \(-m_h d\), directly to the logits \citep{press2022alibi}.

\paragraph{Length extrapolation and context extension.}
Long-context evaluation asks positional kernels to operate at lags larger than those seen during training.
XPos modifies rotary attention with a decay structure for length extrapolation \citep{sun2022length}.
Position Interpolation rescales positions into the pretraining range for RoPE-based LLMs \citep{chen2023position}, and YaRN improves the efficiency of such RoPE context extension \citep{peng2023yarn}.
LongRoPE further uses non-uniform interpolation and progressive extension to reach much larger context windows \citep{ding2024longrope}.
These methods differ operationally, but share the concern that unseen-distance behavior is shaped by the analytic form of the positional kernel.

There is also a kernel/function-design view of relative position encodings.
KERPLE treats relative positional embedding as a kernelization of positional differences, while FIRE parameterizes relative position as a learned function and interpolates it for long contexts \citep{chi2022kerple,li2024fire}.
Empirical studies such as NoPE further show that length generalization is highly sensitive to the way positional information is supplied or omitted \citep{kazemnejad2023impact}.
Our construction fits this line at the primitive-basis level: the lag functions are fixed by a non-semisimple one-parameter representation.

\paragraph{Algebraic and group-action views.}
Algebraic Positional Encodings map algebraic structure into orthogonal operators \citep{kogkalidis2023algebraic}.
LieRE generalizes RoPE by learning higher-dimensional skew-symmetric generators and exponentiating them to rotations \citep{ostmeier2024liere}.
Jane Street's group-theoretic treatment emphasizes one-parameter matrix actions for linear, translation-invariant positional mechanisms and notes that defective generators produce polynomial terms \citep{puranik2026group}.
GRAPE gives a broad group-action framework that recovers RoPE from rotational actions and ALiBi-style biases from unipotent additive lifts \citep{zhang2025grape}.

\paragraph{From semisimple positional geometry to non-semisimple blocks.}
Most algebraic or group-theoretic positional encodings used in attention are built from well-conditioned actions such as rotations, orthogonal maps, or direct sums of independently acting subspaces.
Our object is narrower but different.
We do not introduce group-theoretic positional encoding as a new paradigm.
Instead, we study what happens when the rotary complex eigenvalue is placed in a defective block, so that the phase factor and the nilpotent response are coupled in the same non-semisimple representation.
This produces primitive bilinear features of the form \(d\,\expe^{i\omega d}\), rather than separate access to \(\expe^{i\omega d}\) and \(d\).
We therefore do not claim novelty at the level of group-theoretic positional encoding, nilpotent positional mechanisms, or long-context extrapolation in general.
The specific contribution is the coupled complex Jordan block and the induced primitive basis \(d\,\expe^{i\omega d}\) for relative attention logits.
Recent theory on linear recurrences and structured state-space models also suggests that complex eigenvalues are not merely a notational convenience: they can improve information storage and yield formal expressivity or learnability gaps relative to real parameterizations \citep{orvieto2024universality,ranmilo2024complex}.
Jordan-RoPE asks a complementary question: what happens when the complex rotary eigenvalue is made defective rather than diagonal?
Figure~\ref{fig:method-schematic} summarizes this basis compared with RoPE and a direct-sum construction.

\section{Non-semisimple Relative Representations}
\label{sec:representations}

\subsection{Relative scoring for non-orthogonal actions}

We use the causal-lag convention \(d=i-j\ge 0\).
The representation-level object in this paper is an exact one-parameter relative operator
\[
G(d)=\exp(dJ),\qquad G(d_1+d_2)=G(d_1)G(d_2).
\]
For orthogonal rotary actions, applying the same absolute map to queries and keys is enough.
For non-orthogonal actions, this is no longer true: in general \(G(i)^\top G(j)\ne G(i-j)\).
The relative score is recovered by using the dual action on queries.

\begin{proposition}[Relative scoring for non-orthogonal actions]
Let \(G:\R\to \mathrm{GL}(D)\) be a one-parameter representation, and let \(A(t)=G(-t)\).
Define transformed queries and keys by
\[
\widetilde q_i=A(i)^{-\top}q_i,\qquad
\widetilde k_j=A(j)k_j.
\]
Then the attention score is a function of the causal lag:
\[
\langle \widetilde q_i,\widetilde k_j\rangle
=q_i^\top A(i)^{-1}A(j)k_j
=q_i^\top G(i-j)k_j.
\]
\end{proposition}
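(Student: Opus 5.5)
The plan is to compute the score directly from the definitions and reduce everything to the group law \(G(d_1+d_2)=G(d_1)G(d_2)\) together with \(G(0)=I\). Two facts about \(A(t)=G(-t)\) will be recorded first. Each \(A(t)\) lies in \(\mathrm{GL}(D)\), so the inverse-transpose \(A(i)^{-\top}\) in the definition of \(\widetilde q_i\) is well defined. Moreover, \(A\) inherits the homomorphism property: \(A(s)A(t)=G(-s)G(-t)=G(-(s+t))=A(s+t)\). In particular, \(A(i)^{-1}=G(-i)^{-1}=G(i)\), because \(G(i)G(-i)=G(0)=I\). That \(G(0)=I\) follows from \(G(0)=G(0)G(0)\) and the invertibility of \(G(0)\).

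Next, I expand the inner product as
\[
\langle \widetilde q_i,\widetilde k_j\rangle=\widetilde q_i^{\,\top}\widetilde k_j=\bigl(A(i)^{-\top}q_i\bigr)^{\top}A(j)k_j .
\]
To rewrite the transpose, I use \((M^{-\top})^{\top}=M^{-1}\), which holds because transpose and inverse commute for an invertible matrix. This gives the middle expression \(q_i^\top A(i)^{-1}A(j)k_j\) stated in the proposition. Substituting \(A(i)^{-1}=G(i)\) and \(A(j)=G(-j)\), and applying the group law, yields \(G(i)G(-j)=G(i-j)\). Hence the score equals \(q_i^\top G(i-j)k_j\), and it depends on the positions \(i,j\) only through the causal lag \(d=i-j\).

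The argument has no real analytic obstacle; the only point needing care is bookkeeping. The contragredient (inverse-transpose) action on queries is exactly what cancels the transpose in the bilinear form. Had one instead used \(A(i)\) on both sides, one would get \(A(i)^\top A(j)\), which collapses to a function of \(i-j\) only when \(A(i)^\top=A(i)^{-1}\), i.e. in the orthogonal case. I would note this as a remark to explain why the construction is needed for non-orthogonal Jordan blocks, since \(\exp(dJ)\) with nilpotent part is not orthogonal. Continuity of \(G\) is not used. Only the algebraic homomorphism property \(\R\to\mathrm{GL}(D)\) is required, so the result applies verbatim to \(G(d)=\exp(dJ)\) for any generator \(J\), defective or not.
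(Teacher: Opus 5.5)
Your proposal is correct and follows essentially the same route as the paper: both reduce the score to \(A(i)^{-1}A(j)=G(-i)^{-1}G(-j)=G(i)G(-j)=G(i-j)\) via the one-parameter law. You add bookkeeping the paper leaves implicit, namely \(G(0)=I\) and \((M^{-\top})^{\top}=M^{-1}\), and these steps are correct.
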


\begin{proof}
Since \(A(t)=G(-t)\), the one-parameter law gives
\[
A(i)^{-1}A(j)=G(-i)^{-1}G(-j)=G(i)G(-j)=G(i-j).
\]
Substituting this into the bilinear score gives the claim.
\end{proof}

This contragredient query action is the main implementation distinction from ordinary RoPE.
It is the convention used by the Transformer experiments in this repository: the module applies a position-wise transform with \(t=-\mathrm{position}\), using the inverse transpose on queries and the primal transform on keys.

\subsection{Semisimple, direct-sum, and defective generators}

RoPE is the semisimple complex case: for one rotary frequency, the generator is \(J_{\mathrm{RoPE}}=i\omega I\), giving the phase basis \(\expe^{i\omega d}\).
A direct-sum phase/distance baseline keeps these ingredients in separate invariant subspaces, schematically
\[
J_{\mathrm{ds}}=
\begin{pmatrix}
i\omega I & 0\\
0 & N
\end{pmatrix},
\qquad
\exp(dJ_{\mathrm{ds}})
=\expe^{i\omega d}\oplus(I+dN).
\]
This makes phase and distance available, but not as a coupled primitive bilinear feature.

The non-semisimple rotary object studied here instead places the complex rotary eigenvalue and the nilpotent response in the same indecomposable block:
\[
J_m=(-\gamma+i\omega)I+\eta N_m,\qquad N_m^m=0.
\]
The phase and polynomial response are therefore coupled before the attention softmax.
Several baselines arise by constraining this object.
If \(\eta=0\), it becomes damped RoPE.
If \(\omega=0\), it becomes a real unipotent/Jordan distance block.
If \(\gamma=0\), it is an undamped complex Jordan block.

\begin{table*}[t]
    \centering
    \caption{Mechanism taxonomy at the primitive pre-softmax bilinear-kernel level. Complex Jordan blocks are the non-semisimple exact case: they thicken the rotary frequency itself rather than adding a separate distance axis.}
    \label{tab:mechanisms}
    \small
    \begin{tabularx}{\linewidth}{llXX}
        \toprule
        Class & Method & Primitive basis & Representation status \\
        \midrule
        Semisimple & RoPE & \(\cos(\omega d),\sin(\omega d)\) & \textsc{Exact} \\
        Augmented & ALiBi-like lift & \(1,d\) after a constant-channel lift & \textsc{Augmented exact} \\
        Direct sum & RoPE \(\oplus\) distance & separate phase and distance channels & \textsc{Exact} \\
        \textbf{Non-semisimple} & \textbf{Complex Jordan} & \textbf{coupled jets \(d^r\expe^{i\omega d}\)} & \textbf{\textsc{Exact}} \\
        Non-semisimple & Scaled-exact Jordan-RoPE & scaled coupled jets \( (d/L)^r\expe^{-cd/L}\expe^{i\omega d}\) & \textsc{Exact} \\
        Stabilized & Bounded-\(\tau\) shear & bounded local first-jet response & \textsc{Approx.} \\
        Composition & Jordan + ALiBi & jets plus additive recency bias & auxiliary \\
        \bottomrule
    \end{tabularx}
\end{table*}

For the ALiBi-like row, ``augmentation'' means that a linear \(d\) term can be realized after adding a constant channel to the bilinear form, for example through the unipotent lift
\[
\begin{pmatrix}1&d\\0&1\end{pmatrix},
\]
rather than claiming that the usual ALiBi implementation is itself a query/key linear action.

\subsection{Complex Jordan blocks and finite frequency jets}
\label{sec:high-order}

The order-\(m\) complex Jordan relative operator is
\[
G_m(d)
=\expe^{dJ_m}
=\expe^{(-\gamma+i\omega)d}
\sum_{r=0}^{m-1}\frac{(\eta d)^r}{r!}N_m^r.
\]
This is the standard matrix-function behavior of a Jordan block: applying an analytic function to a nontrivial Jordan chain exposes derivatives of that function along the nilpotent direction \citep{higham2008functions}.
For \(f_\omega(d)=\expe^{i\omega d}\),
\[
\partial_\omega^r\expe^{i\omega d}=(id)^r\expe^{i\omega d}.
\]

\begin{theorem}[Finite rotary frequency-jet basis]
An order-\(m\) complex Jordan block generates primitive relative-position features
\[
d^r\expe^{-\gamma d}\cos(\omega d),\qquad
d^r\expe^{-\gamma d}\sin(\omega d),
\qquad r=0,\ldots,m-1.
\]
Equivalently, up to constants, it realizes the finite frequency-jet span
\[
\operatorname{span}\{\partial_\omega^r\expe^{i\omega d}\}_{r=0}^{m-1}.
\]
\end{theorem}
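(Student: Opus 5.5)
I will make the statement precise by reading "primitive relative-position features" as the scalar lag functions that can appear in the score \(q^\top G_m(d)k\) of the relative-scoring proposition, after passing to a real realization. Concretely, the feature space of the block is the span, over all bilinear pairs \((q,k)\), of the entries of \(G_m(d)\), or equivalently of the real and imaginary parts of those entries. The proof then has three steps: compute the entries, pass to a real realization, and identify the resulting span with the frequency jets.

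The first step is to compute the entries of \(G_m(d)\). Since \((-\gamma+i\omega)I\) commutes with \(\eta N_m\), the exponential factors, and \(N_m^m=0\) truncates the series. This gives the displayed formula
\[
G_m(d)=\expe^{(-\gamma+i\omega)d}\sum_{r=0}^{m-1}\frac{(\eta d)^r}{r!}N_m^r .
\]
Taking \(N_m\) in standard form (ones on the superdiagonal), the \((a,a+r)\) entry is
\[
\frac{(\eta d)^r}{r!}\,\expe^{(-\gamma+i\omega)d}.
\]
Choosing \(q=e_a\) and \(k=e_{a+r}\) isolates each \(r\in\{0,\dots,m-1\}\) separately, assuming \(\eta\ne0\). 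So the complex span of the scores is exactly \(\operatorname{span}_{\mathbb C}\{d^r\expe^{(-\gamma+i\omega)d}\}_{r<m}\).

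The second step passes to the real implementation. The real form of the block is the \(2m\times 2m\) matrix \(\expe^{-\gamma d}R(\omega d)\otimes \sum_r \frac{(\eta d)^r}{r!}N_m^r\), where \(R\) is the \(2\times2\) rotation; this is the realification of the block together with its conjugate. Its \(2\times2\) sub-blocks are \(\frac{(\eta d)^r}{r!}\expe^{-\gamma d}R(\omega d)\). Selecting the cos or sin entry of each sub-block by a real choice of \(q,k\) produces each feature \(d^r\expe^{-\gamma d}\cos(\omega d)\) and \(d^r\expe^{-\gamma d}\sin(\omega d)\) in the list. Conversely, every entry of the real block is a real linear combination of these features, so the real span equals the span of the listed \(2m\) functions.

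The third step identifies this span with the jets. Because \(\partial_\omega^r\expe^{i\omega d}=(id)^r\expe^{i\omega d}\), each jet \(\partial_\omega^r\expe^{i\omega d}\) is the basis element \(d^r\expe^{i\omega d}\) multiplied by the nonzero constant \(i^r\). The triangular change of basis is diagonal here, hence invertible, which gives the "up to constants" equivalence (with \(\gamma=0\), or after multiplying by the common factor \(\expe^{-\gamma d}\)).

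The only point needing care is that these are genuine \(2m\) independent primitives rather than a degenerate set. I would show linear independence of \(d^r\expe^{\pm i\omega d}\) for \(\omega\ne0\) by the standard argument for exponential polynomials. Divide a vanishing combination by \(\expe^{-\gamma d}\). Then either apply the annihilating operator \((\partial_d-i\omega)^m\), or compare growth as \(d\to\infty\), which forces the coefficients of \(\expe^{i\omega d}\) and \(\expe^{-i\omega d}\) to vanish separately. Independence of the \(d^r\) then finishes the argument. I expect this independence and conjugate-pair bookkeeping to be the main, though still mild, obstacle; everything else is direct computation.
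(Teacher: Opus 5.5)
Your proposal is correct and is essentially the paper's proof: you truncate $\exp(dJ_m)$ using $N_m^m=0$ and realify, so that each complex feature $d^r\expe^{(-\gamma+i\omega)d}$ splits into its cosine and sine parts; your entry-selection step (which needs $\eta\neq0$), your remark that the jet equivalence holds only after factoring out $\expe^{-\gamma d}$, and your independence check only make explicit what the paper leaves implicit. Two harmless slips in that extra material: the $2\times2$ sub-blocks $\frac{(\eta d)^r}{r!}\expe^{-\gamma d}R(\omega d)$ belong to $\bigl(\sum_r\frac{(\eta d)^r}{r!}N_m^r\bigr)\otimes \expe^{-\gamma d}R(\omega d)$, not to the reversed (only permutation-similar) Kronecker order you wrote; and growth comparison cannot separate $\expe^{i\omega d}$ from $\expe^{-i\omega d}$, since they have equal modulus, so that step also needs the fact that a nonzero $a\expe^{i\omega d}+b\expe^{-i\omega d}$ does not tend to $0$ (or you can simply use the annihilator $(\partial_d-i\omega)^m$).
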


\begin{proof}
Expanding the matrix exponential gives
\[
\expe^{d((-\gamma+i\omega)I+\eta N_m)}
=\expe^{(-\gamma+i\omega)d}
\sum_{r=0}^{m-1}\frac{(\eta d)^r}{r!}N_m^r,
\]
because \(N_m^m=0\).
Realifying the complex coordinates splits each complex feature into cosine and sine components, yielding the displayed basis.
\end{proof}

Thus an order-three block supplies phase, first-jet, and second-jet channels, while an order-four block also supplies a third-jet channel.
The hierarchy is nested: an order-\(m\) chain contains all lower-order jets in the same frequency family.
This is why a higher-order basis can fit a lower-jet target on a finite grid without implying that the higher order is theoretically necessary.

\subsection{The order-two block as the first rotary tangent}

The minimal nontrivial member is \(m=2\).
Let
\[
N=\begin{pmatrix}0&1\\0&0\end{pmatrix},\qquad N^2=0,
\]
and
\[
J_2=(-\gamma+i\omega)I+\eta N.
\]
Then
\[
G_2(d)
=\expe^{(-\gamma+i\omega)d}(I+\eta dN)
=\expe^{(-\gamma+i\omega)d}
\begin{pmatrix}
1&\eta d\\
0&1
\end{pmatrix}.
\]
The resulting real primitive features are
\[
\begin{aligned}
&\expe^{-\gamma d}\cos(\omega d),\quad
\expe^{-\gamma d}\sin(\omega d),\\
&d\,\expe^{-\gamma d}\cos(\omega d),\quad
d\,\expe^{-\gamma d}\sin(\omega d).
\end{aligned}
\]
Order-two Jordan-RoPE is therefore the first rotary frequency tangent:
\[
d\,\expe^{i\omega d}=\frac{1}{i}\partial_\omega\expe^{i\omega d}.
\]
It should not be read as merely adding a separate distance channel to RoPE; the nilpotent response is coupled to the rotary phase inside one defective block.

\subsection{Real block form and channel budget}

Writing
\[
R_\phi=\begin{pmatrix}\cos\phi&-\sin\phi\\ \sin\phi&\cos\phi\end{pmatrix},
\]
the realified order-two block acts on four real coordinates:
\[
\calG_2(d)
=\expe^{-\gamma d}
\begin{pmatrix}
R_{\omega d} & \eta d\,R_{\omega d}\\
0 & R_{\omega d}
\end{pmatrix}.
\]
An order-\(m\) complex Jordan block realifies to \(2m\) real coordinates per frequency.
Thus an order-two block uses \(d_h/4\) frequencies in a head of dimension \(d_h\), an order-three block uses \(d_h/6\), and an order-four block uses \(d_h/8\), assuming divisibility.
The implementation repeats such blocks across a RoPE-style frequency grid; for order two, the default is
\[
\omega_k=\theta^{-2k/d_h}.
\]
The reported comparisons keep the same total head dimension and attention-head budget across methods.
They are therefore dimension-matched, but not exactly frequency-count matched to standard RoPE, whose two-dimensional rotary blocks can use \(d_h/2\) frequencies.
A frequency-grid ablation, for example using \(\omega_k=\theta^{-4k/d_h}\) to span the same endpoint range with \(d_h/4\) Jordan frequencies or restricting RoPE to the same number of frequencies, is a natural follow-up and is not included in the present runs.

\section{Implementation Regimes}
\label{sec:implementation-regimes}

\subsection{Raw exact Jordan-RoPE}

The Exact/raw order-\(m\) object is
\[
G_{m,\mathrm{raw}}(d)
=\expe^{(-\gamma+i\omega)d}
\sum_{r=0}^{m-1}\frac{(\eta d)^r}{r!}N_m^r.
\]
For \(m=2\), this reduces to
\[
G_{2,\mathrm{raw}}(d)
=\expe^{(-\gamma+i\omega)d}(I+\eta dN).
\]
It is a strict one-parameter representation, so the contragredient position-wise implementation recovers a pure relative score.
Its drawback is scale: the nilpotent channels grow polynomially with \(d\), which is useful for unbounded synthetic targets but can be numerically or statistically awkward at long context.

\subsection{Scaled-exact Jordan-RoPE}

The scaled-exact family keeps the representation law while normalizing damping and shear by the training length \(L\):
\[
J_{m,\mathrm{scaled}}
=\left(-\frac{c}{L}+i\omega\right)I+\frac{\eta}{L}N_m,
\]
and therefore
\[
G_{m,\mathrm{scaled}}(d)
=\expe^{-cd/L}\expe^{i\omega d}
\sum_{r=0}^{m-1}\frac{(\eta d/L)^r}{r!}N_m^r .
\]
This is not an approximation to a group action; it is an exact one-parameter representation with a different generator.
Writing \(x=d/L\), the real primitive span contains, up to constant coefficients,
\[
x^r\expe^{-cx}\cos(\omega d),\qquad
x^r\expe^{-cx}\sin(\omega d),
\qquad r=0,\ldots,m-1.
\]
This is the origin of the scaled high-jet targets used in the experiments: the factors \(x^2\expe^{-0.1x}\cos(\omega d)\) and \(x^3\expe^{-0.1x}\cos(\omega d)\) are the second and third real frequency jets of the scaled order-\(m\) relative operator.
In the experiments, scaled-exact variants are important because they preserve the representation-level theory while controlling long-lag growth.

\subsection{Stabilized Jordan-RoPE}

Stabilized Jordan-RoPE replaces the linear shear coordinate by the bounded-\(\tau\) coordinate
\[
\tau(d)=\frac{d}{1+d/L}.
\]
The corresponding explicit relative-kernel diagnostic is
\[
G_{\mathrm{stab}}(d)
=\expe^{-\gamma d}
\begin{pmatrix}
R_{\omega d} & \eta\tau(d)R_{\omega d}\\
0 & R_{\omega d}
\end{pmatrix}.
\]
This keeps the same first-order response near \(d=0\), since \(\tau(d)=d+O(d^2/L)\), while preventing the nilpotent channel from growing without bound.
However,
\[
\tau(d_1+d_2)\ne \tau(d_1)+\tau(d_2),
\]
so Stabilized Jordan-RoPE is not a strict one-parameter representation.
It is a representation-inspired engineering approximation.

\subsection{Position-wise q/k factorization versus explicit relative kernels}

The basis-level experiments can evaluate an explicit pairwise kernel \(G(d)\) directly.
Transformer attention is implemented more efficiently by position-wise query/key maps.
For exact one-parameter representations, Proposition~1 shows that the two views coincide.
For nonlinear bounded-\(\tau\) coordinates, they generally do not.

To see the obstruction, isolate the unipotent coordinate and write
\[
B_\sigma(t)=I+\eta\sigma(t)N.
\]
Then
\[
B_\sigma(-i)^{-1}B_\sigma(-j)
=I+\eta\bigl(\sigma(-j)-\sigma(-i)\bigr)N.
\]
If \(\sigma(t)=t\), this coefficient is \(\eta(i-j)\), a pure function of the lag.
If \(\sigma\) is nonlinear, such as the bounded-\(\tau\) coordinate, the coefficient generally depends on the absolute pair \((i,j)\), not only on \(i-j\).
This is why Exact/raw and Scaled-exact Jordan-RoPE variants are exact relative representations, while Stabilized Jordan-RoPE should be understood as an engineering approximation with the same local Jordan response.

\subsection{Practical parameterization and auxiliary compositions}

The implementation parameterizes damping and nilpotent amplitude with constrained learnable variables, using nonnegative damping and bounded \(\eta\).
The experimental names follow the hierarchy above:
\textbf{Exact/raw Jordan-RoPE} disables bounded \(\tau\),
\textbf{Scaled-exact Jordan-RoPE} uses the \(d/L\)-normalized generator and preserves the group law, and
\textbf{Stabilized Jordan-RoPE} refers to the bounded-\(\tau\) implementation.
The ALiBi-composed variants add the usual additive recency bias to the attention logits; they are auxiliary compositions, not new one-parameter representations.

For mix-order jet experiments, we also use normalized jet spectra.
There the nonnegative coefficients \(\alpha_r\) sum to one and specify the distribution of mass across the base phase and higher jet orders.
This separates coefficient distribution from overall gain: a spectrum such as \((.70,.20,.10)\) means that most mass remains on the base rotary phase, with weaker first- and second-jet corrections.

The high-jet construction is adjacent to polynomial-basis memory models: HiPPO represents sequence history through online projections onto polynomial bases, and later work interprets S4 through exponentially warped orthogonal bases \citep{gu2020hippo,gu2022hippo}.
Our mechanism is different.
The polynomial factors are not used to compress history into a recurrent state; they are inserted directly into the relative attention logit basis as polynomially weighted Fourier modes.
In this sense, high-order Jordan-RoPE asks whether a Transformer benefits when the pre-softmax positional bilinear form itself contains a finite frequency-jet hierarchy.

\FloatBarrier

\section{Experiments}
\label{sec:experiments}

\subsection{Experimental setup}

Unless otherwise noted, reported means and standard deviations use three seeds.
The main order-two synthetic query-LM trains at length \(1024\) and evaluates at \(1024\), \(2048\), \(4096\), and \(8192\).
The high-order synthetic and music-token transfer runs are smaller auxiliary matrices trained at length \(256\), and the auxiliary high-jet WikiText run is trained at length \(512\).
The natural language experiment is a byte-level WikiText-103 \citep{merity2017pointer} language model.
The plots and tables in the experimental sections are generated from local CSV logs produced by the released scripts and configs, including supplemental exact-scaled runs.
The reported experimental names follow the convention above: \textbf{Stabilized Jordan-RoPE} is the bounded-\(\tau\) default, \textbf{Exact/raw Jordan-RoPE} disables bounded \(\tau\), and \textbf{Scaled-exact Jordan-RoPE} denotes the \(d/L\)-normalized exact variant.
All Transformer baselines use the same total head dimension \(d_h\).
RoPE can therefore use \(d_h/2\) two-dimensional rotary frequencies, while the order-two complex Jordan variants use \(d_h/4\) four-dimensional blocks; the direct-sum baseline splits the same \(d_h\) between a RoPE subspace and a real unipotent/Jordan distance subspace.
Damped RoPE in these comparisons is implemented as the same order-two Jordan block with \(\eta=0\), so it also uses \(d_h/4\) frequencies; a full-frequency damped RoPE baseline is left to a separate ablation.
Thus the comparisons are dimension-matched under the implemented attention budget, while frequency-count matching is left as a separate ablation.

\begin{table*}[t]
    \centering
    \caption{Experiment roadmap. The evidence chain moves from primitive basis checks, to trainable synthetic tasks, to high-order jets, to task-selective natural transfer and natural-language boundary tests.}
    \label{tab:experiment-map}
    \small
    \begin{tabularx}{\linewidth}{lccX}
\toprule
Claim tested & Experiment group & Lengths & Main takeaway \\
\midrule
Primitive basis & Fixed kernel and structured probes & \(1024\to8192\) & Jordan features directly fit mixed distance-phase targets. \\
Usable inductive bias & Synthetic query LM & \(1024\to8192\) & A trained Transformer can exploit the coupled first-jet mode. \\
Finite jet hierarchy & High-order fixed-basis probes & \(1024\to8192\) & Order-\(m\) blocks fit frequency jets up to order \(m-1\). \\
Trainable high jets & High-jet synthetic query-LM & \(256\to1024\) & Exact order-three/four variants help when the teacher contains higher jets. \\
Natural positive case & Music-token transfer & \(256\to2048\) & MusicNet gives a task-selective Scaled-exact \(m=3\) positive case. \\
Boundary & WikiText boundary tests & \(512/1024\to8192\) & ALiBi remains strong; Jordan/jet terms help mainly as weak corrections. \\
\bottomrule
\end{tabularx}

\end{table*}

\subsection{Basis-level mixed target}

The first diagnostic is a linear readout over fixed relative-position features.
This is deliberately a kernel-level test, not a general neural approximation test.
The mixed target is
\[
y(d)=\frac{d}{L}\cos(\omega d),
\]
fit on \(d<L=1024\) and evaluated out to \(8192\).
At this primitive logit-basis level, additive or direct-sum combinations provide separate phase and distance channels, whereas the Jordan block directly provides \(d\cos(\omega d)\) and \(d\sin(\omega d)\)-type modes.

\begin{figure*}[t]
    \centering
    \includegraphics[width=0.86\linewidth,trim=0 45 0 0,clip]{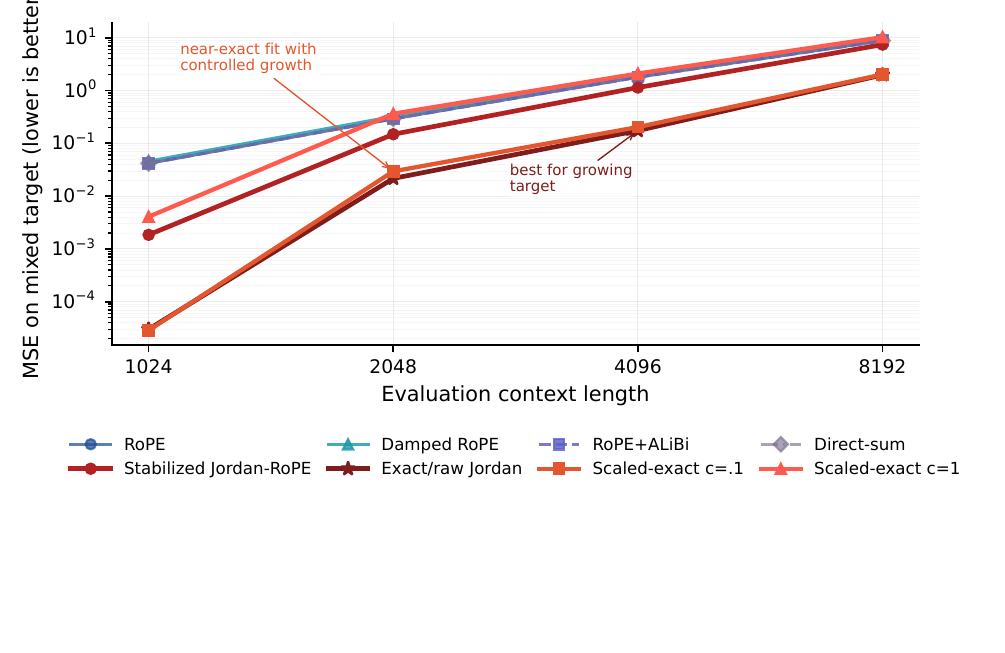}
    \caption{Jordan features directly fit mixed distance-phase targets. On \(y(d)=(d/L)\cos(\omega d)\), Exact/raw Jordan and Scaled-exact \(c=.1\) extrapolate substantially better than RoPE, RoPE+ALiBi, and direct-sum baselines at the primitive fixed-basis level.}
    \label{fig:basis-mixed}
\end{figure*}

Table~\ref{tab:toy-summary} shows the expected specialization.
RoPE solves the phase target, ALiBi/direct distance solves the linear target, and the Jordan-family features are strongest on the mixed target.
The large Linear MSE of the Jordan variants is expected: the complex Jordan block is not a pure distance basis unless an \(\omega=0\) block is included.
Its advantage is on distance-modulated phase targets, not isolated linear-distance targets.
The strongest gains in this diagnostic come from the Exact/raw Jordan and Scaled-exact \(c=0.1\) bases.
Exact/raw Jordan has the lowest mixed MSE because the target itself grows linearly.
The Scaled-exact basis with \(c=0.1\) nearly matches Exact/raw Jordan on the mixed target, while \(c=1\) is over-damped for this growing synthetic target.
Stabilized Jordan-RoPE gives a smaller but consistent improvement, reflecting the tradeoff introduced by \(\tau(d)\).

\begin{table}[!htbp]
    \centering
    \caption{Basis-level MSE at evaluation length \(8192\). The mixed target isolates the coupled frequency-tangent feature; it does not test arbitrary downstream multiplication by an MLP.}
    \label{tab:toy-summary}
    \scriptsize
    \resizebox{\linewidth}{!}{\begin{tabular}{lccc}
\toprule
Method & Phase MSE & Linear MSE & \textbf{Mixed MSE} \\
\midrule
RoPE & \textbf{0.000} & 17.618 & 8.791 \\
Damped RoPE & 0.054 & 17.583 & 9.510 \\
RoPE+ALiBi & 0.000 & 0.000 & 8.791 \\
ALiBi & 0.505 & \textbf{0.000} & 10.666 \\
Direct-sum & 0.000 & 0.000 & 8.819 \\
Stabilized Jordan-RoPE & 0.039 & 50.463 & 7.414 \\
Exact/raw Jordan & 0.004 & 462.392 & \textbf{1.975} \\
Scaled-exact c=.1 & 0.011 & 541.274 & \textbf{2.011} \\
Scaled-exact c=1 & 0.309 & 20.485 & 10.254 \\
\bottomrule
\end{tabular}
}
\end{table}

\subsection{Structured sequence probes}

We also evaluate fixed relative bases on structured targets designed to contain phase drift, seasonal trends, damped waves, rhythm envelopes, and motif spacing.
These are still basis-level probes, but they test a broader set of distance-modulated oscillatory patterns.

The structured probes support the same mechanism-level picture.
When the target contains a persistent product of distance and phase, Jordan modes are favored.
When the target is pure phase, RoPE or Jordan \((\gamma=0)\) variants can match it more cleanly.
Full per-method probe scores are reported in Appendix~\ref{app:structured-full}.

\begin{table}[!htbp]
    \centering
    \caption{Structured fixed-basis probes confirm the same mechanism-level pattern: when the target contains persistent distance-modulated oscillation, Exact/raw Jordan modes are favored.}
    \label{tab:structured-summary}
    \scriptsize
    \resizebox{\linewidth}{!}{\begin{tabular}{llc}
\toprule
Benchmark & Best method & MSE@8192 \\
\midrule
Phase drift & \textbf{Jordan ($\gamma=0$)} & 0.003 \\
Seasonal trend & \textbf{Exact/raw Jordan} & 1.686 \\
Damped wave & \textbf{Exact/raw Jordan} & 0.000 \\
Rhythm envelope & \textbf{Exact/raw Jordan} & 0.835 \\
Motif spacing & \textbf{Exact/raw Jordan} & 0.229 \\
\bottomrule
\end{tabular}
}
\end{table}

\begin{figure*}[t]
    \centering
    \includegraphics[width=0.86\linewidth,trim=0 45 0 0,clip]{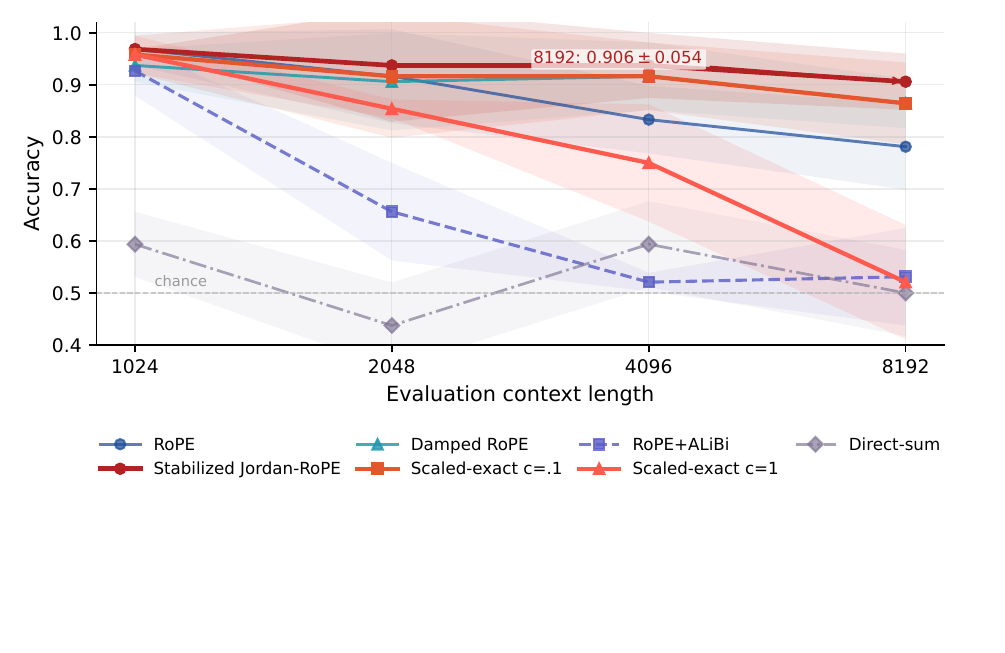}
    \caption{A Transformer can use the coupled Jordan mode when the teacher kernel requires it. On the synthetic query-LM task with \(K(d)=(d/L)\cos(\omega d)\), Stabilized Jordan-RoPE preserves the highest long-context accuracy, while direct-sum and RoPE+ALiBi fail to reliably recover the coupled rule under the same training budget.}
    \label{fig:kernel-lm}
\end{figure*}

\subsection{Jordan-friendly synthetic LM}

The strongest discrete sequence evidence comes from a synthetic query language-model task.
Each sequence contains random bits followed by a query token.
The next-token label is determined by the sign of a weighted sum of previous bits using the teacher kernel
\[
K(d)=\frac{d}{L}\cos(\omega d),
\]
with \(L=1024\).
The model must learn a distance-modulated phase rule and extrapolate it beyond the training length.

\begin{table*}[t]
    \centering
    \caption{Synthetic query-LM endpoint summary after 1200 training steps. The 8192-context endpoint is the main extrapolation test.}
    \label{tab:kernel-summary}
    \small
    \begin{tabularx}{\linewidth}{lccX}
\toprule
Method & Acc@1024 & Acc@8192 & Interpretation \\
\midrule
RoPE & 0.969 $\pm$ 0.031 & 0.781 $\pm$ 0.083 & phase-only baseline \\
Damped RoPE & 0.938 $\pm$ 0.031 & 0.865 $\pm$ 0.048 & strong damping baseline \\
RoPE+ALiBi & 0.927 $\pm$ 0.048 & 0.531 $\pm$ 0.094 & recency composition baseline \\
Direct-sum & 0.594 $\pm$ 0.062 & 0.500 $\pm$ 0.083 & fails coupled rule \\
Stabilized Jordan-RoPE & 0.969 $\pm$ 0.000 & \textbf{0.906 $\pm$ 0.054} & best long-context \\
Scaled-exact c=.1 & 0.958 $\pm$ 0.036 & 0.865 $\pm$ 0.079 & controlled exact variant \\
Scaled-exact c=1 & 0.958 $\pm$ 0.036 & 0.521 $\pm$ 0.110 & over-damped for this target \\
\bottomrule
\end{tabularx}

\end{table*}

At length \(8192\), Stabilized Jordan-RoPE reaches \(0.906\pm0.054\) accuracy, compared with \(0.865\pm0.048\) for damped RoPE, \(0.781\pm0.083\) for RoPE, and \(0.500\pm0.083\) for direct-sum.
The Scaled-exact variant with \(c=0.1\) reaches \(0.865\pm0.079\), roughly matching damped RoPE but below Stabilized Jordan-RoPE on this task.
With \(c=1\), accuracy drops to \(0.521\pm0.110\), indicating that the teacher kernel is too growth-dependent for strong damping.
This is the main evidence that Stabilized Jordan-RoPE can be useful in a trainable sequence model when the task actually rewards the coupled mode.
These sequence results should be interpreted as an inductive-bias comparison under a fixed architecture and training budget, not as an approximation lower bound.
Direct-sum and additive baselines may form coupled products through deeper nonlinear computation; the point is that the Jordan block provides the coupled distance-phase feature directly at the pre-softmax positional-bilinear level.

\subsection{High-order fixed-basis probes}

We test the frequency-jet hierarchy from Section~\ref{sec:high-order} with fixed-basis high-jet probes.
Let \(x=d/L\), with \(L\) the training length.
The scaled targets are
\[
y_r(d)=x^r\expe^{-0.1x}\cos(\omega d),
\]
fit on \(d<L\) and evaluated at length \(8192\).
The comparison includes frequency-count matched RoPE controls, direct-sum polynomial controls, ordinary order-two Jordan-RoPE, order-three and order-four Stabilized Jordan-RoPE variants, and Scaled-exact high-jet bases.
Table~\ref{tab:high-jet-basis} also reports the theoretical minimal Jordan order for each jet target and an explicit scaled \(m=2\) \(R^2\) check.
The control column reports the best \(R^2\) among the matched RoPE and direct-sum controls.

\begin{table*}[t]
    \centering
    \caption{High-order fixed-basis probes verify the finite-jet hierarchy. A target with jet order \(r\) requires a Jordan chain of order \(m\ge r+1\); the control column shows that matched RoPE and direct-sum polynomial controls do not supply the same local frequency jet. The first row uses an undamped first-jet target; Appendix~\ref{app:matched-high-jet} gives the damping-matched control where \(m=2\) fits the first jet essentially exactly.}
    \label{tab:high-jet-basis}
    \small
    \resizebox{\linewidth}{!}{\begin{tabular}{llclccc}
\toprule
Target at length 8192 & Min. order & Scaled \(m=2\) \(R^2\) & Best high-jet basis & MSE & \(R^2\) & Best control \(R^2\) \\
\midrule
\(x\cos(\omega d)\) (undamped) & \textbf{2} & 0.8387 & \textbf{Scaled-exact \(m=4,c=.1\)} & \(8.14{\times}10^{-3}\) & \textbf{0.9992} & \textcolor{gray}{0.1757} \\
\(x^2 e^{-.1x}\cos(\omega d)\) & \textbf{3} & 0.2908 & \textbf{Scaled-exact \(m=3,c=.1\)} & \(3.14{\times}10^{-6}\) & \textbf{1.0000} & \textcolor{gray}{0.0326} \\
\(x^3 e^{-.1x}\cos(\omega d)\) & \textbf{4} & 0.0396 & \textbf{Scaled-exact \(m=4,c=.1\)} & \(1.61\) & \textbf{0.9997} & \textcolor{gray}{0.0033} \\
\bottomrule
\end{tabular}
}
\end{table*}

The fixed-basis results cleanly support the jet interpretation.
Matched RoPE and direct-sum controls have low \(R^2\) on the scaled second- and third-jet targets, while the Scaled-exact order-three and order-four Jordan bases fit them almost exactly.
The selection of an order-four basis on the first-jet target should not be read as evidence that a fourth-order chain is necessary for the first jet.
The minimal order for a first jet is two.
Here the first target is the undamped \(x\cos(\omega d)\), while the scaled bases include the damping factor \(\expe^{-0.1x}\); the higher-order scaled bases can use their nested lower jets and extra polynomial terms to compensate this finite-grid damping mismatch.
Appendix~\ref{app:matched-high-jet} reports the matched scaled target \(x\expe^{-0.1x}\cos(\omega d)\), where the expected \(m=2\) basis fits the first jet essentially exactly.
This is a basis-level result: it shows that the intended functions are present in the positional feature family, not that a trained Transformer will necessarily learn to use every higher jet.

\subsection{High-jet synthetic query-LM}

We therefore also run a high-jet synthetic query-LM task.
As in the first-jet synthetic LM, each sequence contains random bits followed by a query token, and the label is determined by the sign of a teacher-kernel weighted sum.
Here the teacher kernels are the scaled second- and third-jet targets \(y_2(d)\) and \(y_3(d)\).
Appendix~\ref{app:high-jet-lm} reports the long-context results most relevant to the high-order claim.
The second-jet task favors the order-three exact variant.
For the third-jet task, allowing the nilpotent amplitude to reach order-one scale makes the order-four exact variant the best mean-loss method at contexts 512 and 1024, and the best accuracy method at 1024.

These results should be read as a controlled extension of the main order-two claim.
They do not imply that higher-order Jordan blocks are broadly better positional encodings.
Rather, they show that when the teacher actually contains a higher frequency jet, the corresponding nilpotent chain can provide a usable inductive bias.
The detailed high-jet synthetic matrix is reported in Appendix~\ref{app:high-jet-lm}.

\subsection{Task-selective music-token transfer: a MusicNet positive case}

Music is a natural stress test for relative-position mechanisms because repetition, motif return, phrase-level timing, and long-range self-reference occur on multiple timescales.
This motivation also underlies Music Transformer, where relative attention was used to model long-term musical structure \citep{huang2018musictransformer}.
We therefore include a small music-token transfer sweep using byte-level streams derived from MAESTRO MIDI, FMA-small mel features, MusicNet mel features, and a mixed audio/MIDI corpus \citep{hawthorne2019maestro,defferrard2017fma,thickstun2017musicnet}.
Models are trained at length \(256\) for 800 steps and evaluated up to length \(2048\) with five seeds.
Table~\ref{tab:music-high-jet} reports the 2048-context validation loss.

\begin{table*}[t]
    \centering
    \caption{Music/audio transfer gives a selective natural positive case, not a universal win. MusicNet is the strongest case for Scaled-exact \(m=3\); other datasets honestly favor damping or direct-sum structure.}
    \label{tab:music-high-jet}
    \small
    \begin{tabularx}{\linewidth}{
  >{\raggedright\arraybackslash}p{0.14\linewidth}
  >{\raggedright\arraybackslash}p{0.30\linewidth}
  >{\raggedright\arraybackslash}p{0.21\linewidth}
  >{\raggedright\arraybackslash}X}
\toprule
Dataset & Winner at 2048 & Scaled-exact \(m=3\) status & Interpretation \\
\midrule
Mixed audio/MIDI & Direct-sum, \(4.1369\pm0.1200\) & close, \(4.2336\pm0.0552\) & separate channels suffice \\
MAESTRO & Scaled-exact \(m=3,c=.1,\eta_0=.2\), \(3.6857\pm0.2822\) & \textbf{best} & weak high-jet signal \\
FMA-small & Damped RoPE, \(3.8711\pm0.0297\) & not best, \(4.0991\pm0.2103\) & damping dominates \\
\textbf{MusicNet} & \textbf{Scaled-exact \(m=3,c=.5,\eta_0=.1\), \(3.5901\pm0.1201\)} & \textbf{best} & strongest natural positive case \\
\bottomrule
\end{tabularx}

\end{table*}

The result is not uniformly positive across datasets, but it reveals a clear natural positive case.
On MusicNet mel-byte tokens, the Scaled-exact order-three variant \((m=3,c=0.5,\eta_0=0.1)\) is the best method at contexts 512, 1024, and 2048, reaching \(3.5901\pm0.1201\) validation loss at length 2048.
This compares with \(4.1211\pm0.3638\) for damped RoPE and \(4.7015\pm0.2265\) for the direct-sum phase/distance baseline.
By contrast, the mixed audio/MIDI corpus continues to favor direct-sum, FMA-small favors damped RoPE, and MAESTRO gives only a weaker Scaled-exact \(m=3\) signal.
Thus the MusicNet result should be read as evidence for task-selective transfer of the high-jet inductive bias, not as a broad claim that higher-order Jordan blocks are uniformly better for audio.

We further inspected the MusicNet checkpoints by measuring the positioned query/key norm profiles.
For the winning \(m=3,c=0.5,\eta_0=0.1\) model, the positioned-key norm grows by roughly \(206\times\) from position 0 to 2047, while the positioned-query norm shrinks to roughly \(1\%\) of its initial value.
The learned parameters have mean \(\gamma\approx 0.0024\) and mean \(|\eta|\approx 0.104\).
This norm profile is not merely a numerical artifact: it is the expected signature of an exact non-orthogonal Jordan factorization, where the key side stores a position-amplified high-jet basis and the contragredient query side compensates it in the bilinear score.
It also marks a practical caveat for cache compression, because the same steep \(K\)-cache norm profile may make low-bit key quantization harder.
Overall, the music-token results reinforce the task-selective interpretation: high-order Jordan modes are useful when the data rewards higher-order distance-modulated phase structure, but they are not a uniform replacement for damping or direct-sum biases.

\subsection{WikiText boundary tests}
\label{sec:wikitext}

Finally, we train a small causal Transformer byte language model on WikiText-103.
The run uses \(d_{\mathrm{model}} = 384\), six layers, six heads, training length \(1024\), and evaluation lengths up to \(8192\).
This is a small natural-language test case, not a claim that the Jordan family dominates established long-context biases.

\begin{figure*}[t]
    \centering
    \includegraphics[width=0.86\linewidth,trim=0 45 0 0,clip]{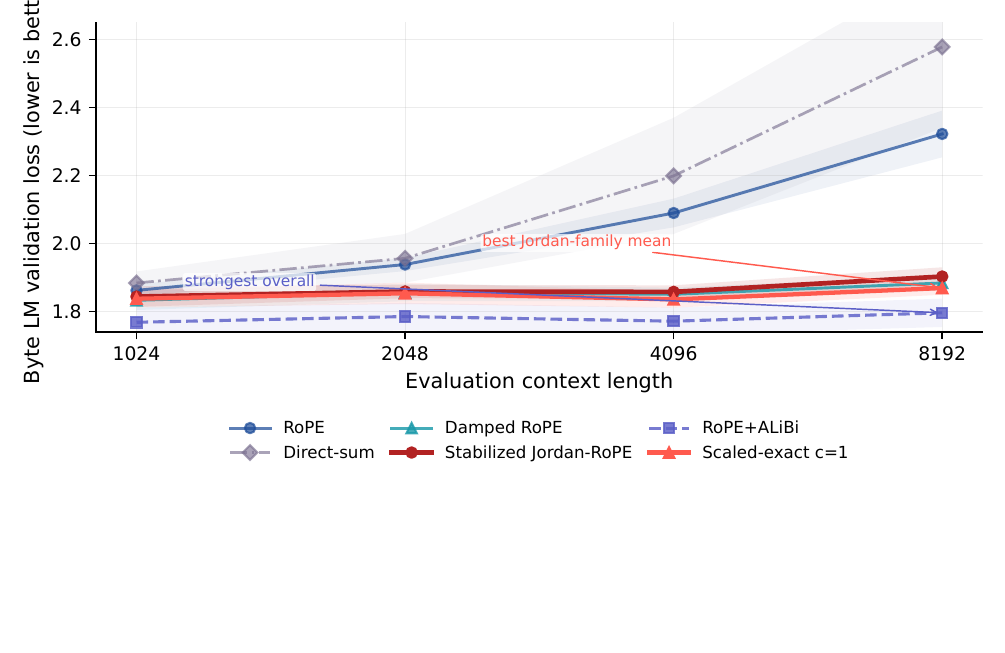}
    \caption{WikiText is a boundary test rather than a broad win. RoPE+ALiBi remains strongest overall, while Scaled-exact Jordan-RoPE with \(c=1\) gives the best mean loss within the Jordan family and stays competitive with damped RoPE.}
    \label{fig:wikitext}
\end{figure*}

\begin{table*}[t]
    \centering
    \caption{WikiText-103 byte LM boundary summary at length \(8192\). \(\Delta\) loss is relative to each method's loss at length \(1024\); the larger-\(\eta_0\) sweep is moved to Appendix~\ref{app:wikitext-eta}.}
    \label{tab:wikitext-summary}
    \small
    \begin{tabularx}{\linewidth}{lcccX}
\toprule
Method & Loss@8192 & $\Delta$ loss & $|\eta|$ mean & Role \\
\midrule
RoPE & 2.322 $\pm$ 0.069 & 0.460 $\pm$ 0.056 & -- & phase-only baseline \\
Damped RoPE & 1.884 $\pm$ 0.022 & 0.051 $\pm$ 0.010 & 0.0000 & damping baseline \\
RoPE+ALiBi & \textbf{1.796 $\pm$ 0.042} & 0.028 $\pm$ 0.007 & -- & best overall \\
ALiBi & 1.932 $\pm$ 0.028 & 0.017 $\pm$ 0.002 & -- & additive recency \\
Direct-sum & 2.578 $\pm$ 0.238 & 0.694 $\pm$ 0.206 & 0.0013 $\pm$ 0.0000 & weak baseline \\
Stabilized Jordan-RoPE & 1.903 $\pm$ 0.027 & 0.059 $\pm$ 0.003 & 0.0013 $\pm$ 0.0000 & bounded-$\tau$ default \\
Scaled-exact c=1 & \textbf{1.869 $\pm$ 0.020} & 0.031 $\pm$ 0.009 & 0.0014 $\pm$ 0.0001 & best Jordan family \\
Scaled-exact c=.1 & 1.907 $\pm$ 0.027 & 0.047 $\pm$ 0.010 & 0.0021 $\pm$ 0.0001 & weaker damping \\
\bottomrule
\end{tabularx}

\end{table*}

Table~\ref{tab:wikitext-summary} shows the boundary of the claim.
Stabilized Jordan-RoPE improves substantially over RoPE and direct-sum in long-context loss.
The Scaled-exact Jordan-RoPE variant with \(c=1\) gives the lowest Jordan-family mean loss, reaching \(1.869\pm0.020\) at length \(8192\), compared with \(1.903\pm0.027\) for Stabilized Jordan-RoPE and \(1.884\pm0.022\) for damped RoPE.
The difference from damped RoPE is small at three seeds and should be read as a mean trend rather than a statistically decisive separation.
However, RoPE+ALiBi remains strongest at \(1.796\pm0.042\).
The learned default shear stays small, and larger \(\eta_0\) initializations for Stabilized Jordan-RoPE worsen long-length loss.
The natural-LM results should therefore not be read as evidence that large learned shear is broadly beneficial.
In this small byte-level setting, the learned shear remains small, larger forced \(\eta_0\) in Stabilized Jordan-RoPE hurts long-length loss, and the strongest gains appear to come from controlled damping with a compatible weak Jordan-RoPE channel.
Thus the WikiText result is best interpreted as evidence that a Scaled-exact Jordan-RoPE parameterization can coexist with useful damping, rather than as evidence for strong oscillatory-polynomial shear in natural language.

\subsection{High-jet WikiText auxiliary run}

We additionally test whether the high-order exact variants that helped in the synthetic probes and in selected music-token tasks transfer to WikiText.
Because order-three and order-four Jordan blocks require head dimensions divisible by \(6\) and \(8\), respectively, the auxiliary high-jet runs use a separate \(d_{\mathrm{model}} = 192\), three-layer, four-head byte LM trained at length \(512\) on a WikiText-103 subset.
It should therefore be read as an internal comparison within the auxiliary matrix, not as a direct numerical replacement for Table~\ref{tab:wikitext-summary}.

\begin{table*}[t]
    \centering
    \caption{Auxiliary high-jet WikiText summary. High-order jets do not replace recency bias by themselves, but weak normalized jet corrections can help when composed with ALiBi.}
    \label{tab:high-jet-wikitext-summary}
    \small
    \begin{tabularx}{\linewidth}{l l c X}
\toprule
Setting & Best method & Loss@8192 & Takeaway \\
\midrule
No ALiBi & Exact \(m=2,c=1.5,\eta_0=.010\) & \(2.0482\pm0.0040\) & High jets alone do not replace recency; normalized \(m=3\) is close but base-heavy. \\
+ALiBi & \textbf{Norm-jet \(m=3,c=1.5,\alpha\approx(.69,.21,.10)\)+ALiBi} & \textbf{\(1.9759\pm0.0138\)} & Weak jet corrections can improve on top of a strong recency prior. \\
Baseline & RoPE+ALiBi & \(2.0037\pm0.0167\) & Strong additive-bias baseline. \\
\bottomrule
\end{tabularx}

\end{table*}

Table~\ref{tab:high-jet-wikitext-summary} summarizes the auxiliary matrix.
Without ALiBi, the best standalone method remains an order-two exact control, and the best normalized order-three spectrum is close but strongly base-heavy, approximately \((.84,.11,.05)\).
When the same ALiBi bias is added to tuned Jordan candidates, a learned normalized order-three jet spectrum, approximately \(\alpha=(.69,.21,.10)\), reaches \(1.9759\pm0.0138\), slightly lower than the RoPE+ALiBi baseline in the same auxiliary setup.
The full no-ALiBi and ALiBi-composed tables are reported in Appendix~\ref{app:high-jet-wikitext}.
Together these results support a narrow conclusion: on this WikiText subset, high-order jets are most useful as weak corrections on top of a strong recency prior, not as standalone replacements for that prior.

\section{Discussion and Limitations}

The purpose of the experiments is to isolate the induced relative-position basis, not to establish a new state-of-the-art positional encoding.
The fixed-basis probes verify that the complex Jordan block supplies coupled oscillatory-polynomial modes directly.
The synthetic LM shows that a Transformer can use the Stabilized Jordan-RoPE channel when the task requires that mode.
The WikiText runs show partial transfer to natural language length extrapolation: Scaled-exact variants give the best mean losses within the Jordan family in the base comparison, and focused auxiliary runs show that normalized high-jet spectra do not replace ALiBi by themselves but can improve when composed with it.
The high-order experiments extend this picture from the first tangent mode \(d\,\expe^{i\omega d}\) to a finite frequency-jet hierarchy.
The Scaled-exact order-three MusicNet result suggests that such higher jets can be useful beyond teacher-designed synthetic kernels, but the mixed audio/MIDI, FMA-small, and no-ALiBi WikiText results also show that this benefit is task-selective.

\paragraph{Defective generators as non-normal attention dynamics.}
The original puzzle behind this construction was the status of defective generators in positional encoding.
Diagonalizable generators have a clear interpretation: real eigenvalues give exponential decay or growth, and complex conjugate eigenvalues give rotations.
Defective generators are less transparent because their exponentials contain polynomial terms.
The experiments here suggest one useful interpretation.
In the complex case, the nilpotent direction should not be viewed as an isolated polynomial bias; it is a tangent direction attached to the rotary phase.
This is why the natural primitive mode is \(d\,\expe^{i\omega d}\), not merely \(d\).
Equivalently, Jordan-RoPE is a non-reduced Fourier positional encoding: ordinary RoPE samples reduced points on the Fourier phase curve, while a complex Jordan block replaces each point by a finite infinitesimal neighborhood.
The high-order variants then supply the corresponding osculating jet rather than simply adding more Fourier frequencies.

From this angle, Jordan-RoPE is a small attention-kernel instance of non-normal linear dynamics, where stable eigenvalues need not preclude large transient responses \citep{trefethen1993hydrodynamic,trefethen2005spectra}.
Stabilized Jordan-RoPE and Scaled-exact Jordan-RoPE are practical ways of retaining this transient response while controlling long-lag growth.
This interpretation also explains the empirical boundary: tasks with explicit distance-modulated phase structure can benefit from the Jordan channel, while natural byte-level language modeling in our small setting mostly rewards damping and recency bias.

\paragraph{Linear dynamical kernels.}
The oscillatory-polynomial functions produced by a Jordan block are also familiar from linear dynamical systems: matrix exponentials of defective generators contain polynomial factors multiplying exponentials.
This connects Jordan-RoPE conceptually to the broader use of linear state-space dynamics and long convolutional sequence operators in long-range sequence modeling, including S4, DSS, and Hyena \citep{gu2021s4,gupta2022dss,poli2023hyena}.
However, our construction is not a state-space layer.
It uses the same exponential-polynomial algebra inside the relative attention logit kernel, thereby changing the primitive pairwise positional basis rather than replacing attention by a convolutional or recurrent operator.

There are several limitations.
First, bounded \(\tau\) sacrifices the exact one-parameter law.
Second, the current Transformer implementation is a position-wise stabilized factorization; it is not the same object as an explicit pairwise \(G_{\mathrm{stab}}(d)\) relative kernel once \(\tau\) is nonlinear.
Third, although the relative kernel is damped, the non-orthogonal position-wise contragredient factorization can introduce compensating absolute rescalings in queries and keys.
In our small-scale experiments this is numerically stable, but exact pairwise relative-kernel implementations or centered factorizations may be preferable for very long contexts.
Fourth, the natural-LM runs are small, byte-level, and undertrained relative to modern long-context language-model practice.
Fifth, direct-sum and additive baselines might produce coupled products through deeper nonlinear computation, so our expressivity statement is intentionally restricted to the primitive pre-softmax bilinear positional basis.
Sixth, retrieval-style tasks require separate training and tuning, and are not reported here.

Future work should test broader \(c\), \(\eta\), frequency-grid choices, and exact relative-kernel implementations, as well as tasks where distance-modulated phase structure occurs naturally.

\paragraph{Code and data availability.}
The implementation, configuration files, smoke tests, and lightweight reproducibility bundle are available at \url{https://github.com/ybzhang-nxu/jordan_rope}.
The \texttt{repro/} directory contains the CSV logs and scripts used to regenerate the reported tables and figures.
Large checkpoints, downloaded datasets, and processed corpora are not included because they can be regenerated from the released scripts and configurations.

\appendix

\section{Implementation Notes}

The experimental module exposes \texttt{JordanRoPE.apply(q, k, positions)} with tensors shaped \(B\times H\times T\times d_h\) and \(d_h\) divisible by four for the order-two realified block.
Default parameters use \(\omega_m=\theta^{-2m/d_h}\), \(\gamma=\mathrm{softplus}(a)+10^{-4}\), \(\eta=0.1\tanh(b)\), and \(L=1024\).
The default Stabilized Jordan-RoPE path uses bounded \(\tau\); the \texttt{jordan\_raw\_tau} ablation disables bounded \(\tau\).
Although the relative kernel is damped, the position-wise contragredient factorization uses compensating absolute query/key rescalings.
For example, the Scaled-exact variant with \(c=1\), \(L=1024\), and \(T=8192\) can introduce factors on the order of \(\expe^8\) before cancellation in the bilinear score.
This did not cause numerical failures in our runs, but it is a practical reason to consider centered factorizations or explicit pairwise relative kernels for much longer contexts.
A simple mitigation is to use a centered factorization over a finite evaluation window.
For an exact one-parameter representation,
\[
A(i-c_0)^{-1}A(j-c_0)=G(i-j)
\]
for any center \(c_0\).
Choosing \(c_0\) near the middle of the current window reduces the largest absolute query/key rescaling while preserving the exact relative score.
For nonlinear bounded \(\tau\), this no longer restores a pure lag law, but it can still reduce the numerical range.

\section{High-Jet Synthetic Query-LM Details}
\label{app:high-jet-lm}

Table~\ref{tab:high-jet-lm} reports the detailed high-jet synthetic query-LM matrix.
The second-jet run uses three seeds; the third-jet \(\eta_0\)-range confirmation uses five seeds.
Methods are trained at length \(256\) and evaluated at longer contexts.

\begin{table}[!htbp]
    \centering
    \caption{Trainable models can exploit higher frequency jets when the teacher kernel contains them. The second-jet task favors exact \(m=3\), while the third-jet task benefits from exact \(m=4\) once the nilpotent amplitude is allowed to be large enough.}
    \label{tab:high-jet-lm}
    \scriptsize
    \resizebox{\linewidth}{!}{\begin{tabular}{llcccc}
\toprule
Teacher kernel & Context & Best accuracy method & Accuracy & Best loss method & Loss \\
\midrule
Scaled second jet & 512 & Exact \(m=3,c=.1,\eta_0=.01\) & \(0.9097\pm0.0335\) & Exact \(m=3,c=.1,\eta_0=.01\) & \(0.2697\pm0.1507\) \\
\textbf{Scaled second jet} & \textbf{1024} & \textbf{Exact \(m=3,c=.1,\eta_0=.01\)} & \textbf{\(0.9410\pm0.0318\)} & \textbf{Exact \(m=3,c=.1,\eta_0=.01\)} & \textbf{\(0.2280\pm0.1652\)} \\
\midrule
Scaled third jet & 512 & Exact \(m=3,c=.1,\eta_0=.1\) & \(0.9333\pm0.0380\) & Exact \(m=4,c=.1,\eta_0=.1\) & \(0.1713\pm0.0747\) \\
\textbf{Scaled third jet} & \textbf{1024} & \textbf{Exact \(m=4,c=.1,\eta_0=.1\)} & \textbf{\(0.9083\pm0.0298\)} & \textbf{Exact \(m=4,c=.1,\eta_0=.1\)} & \textbf{\(0.2328\pm0.0652\)} \\
\bottomrule
\end{tabular}
}
\end{table}

\section{Full Structured Probe Scores}
\label{app:structured-full}

Table~\ref{tab:structured-full} reports all fixed-basis structured probe scores at evaluation length \(8192\).
In this appendix, ``Jordan \((\gamma=0)\)'' denotes the same Jordan basis with damping disabled, while ``Jordan \((m=3)\)'' uses an order-three nilpotent chain.
In the kernel and WikiText tables, ``large \(\eta_0\) init'' denotes the larger initial-shear variant used in the \(\eta_0\) initialization sweep.

\begin{table}[!htbp]
    \centering
    \caption{Full structured probe MSE at length \(8192\).}
    \label{tab:structured-full}
    \scriptsize
    \resizebox{\linewidth}{!}{\begin{tabular}{lrrrrr}
\toprule
Method & Phase drift & Seasonal trend & Damped wave & Rhythm envelope & Motif spacing \\
\midrule
RoPE & 0.004 & 7.120 & 1.345 & 3.449 & 0.693 \\
ALiBi & 0.505 & 9.996 & 2.103 & 4.671 & 1.093 \\
RoPE+ALiBi & 0.004 & 7.121 & 1.346 & 3.450 & 0.696 \\
Damped RoPE & 0.056 & 8.164 & 1.493 & 3.901 & 0.829 \\
Direct-sum & 0.004 & 7.145 & 1.345 & 3.457 & 0.711 \\
Stabilized Jordan-RoPE & 0.041 & 6.395 & 0.935 & 3.034 & 0.511 \\
Jordan ($\gamma=0$) & \textbf{0.003} & 4.632 & 0.608 & 2.226 & 0.265 \\
Exact/raw Jordan & 0.005 & \textbf{1.686} & \textbf{0.000} & \textbf{0.835} & \textbf{0.229} \\
Jordan ($m=3$) & 0.028 & 4.832 & 0.543 & 2.302 & 0.259 \\
\bottomrule
\end{tabular}
}
\end{table}

\section{WikiText Initialization and High-Jet Details}
\label{app:wikitext-eta}
\label{app:high-jet-wikitext}

Table~\ref{tab:wikitext-eta-sweep} reports the Stabilized Jordan-RoPE \(\eta_0\) initialization sweep omitted from the main WikiText boundary table.
Larger forced \(\eta_0\) increases \(|\eta|\) and worsens long-context loss, supporting the main text's weak-shear interpretation.

\begin{table}[!htbp]
    \centering
    \caption{WikiText Stabilized Jordan-RoPE \(\eta_0\) initialization sweep at length \(8192\). Larger forced \(\eta_0\) hurts this natural-LM setting.}
    \label{tab:wikitext-eta-sweep}
    \scriptsize
    \resizebox{\linewidth}{!}{\begin{tabular}{lccc}
\toprule
Initialization & Loss@8192 & \(\Delta\) loss & \(|\eta|\) mean \\
\midrule
Stabilized Jordan-RoPE \(\eta_0=.005\) & \(1.945\pm0.055\) & \(0.074\pm0.002\) & \(0.0040\pm0.0000\) \\
Stabilized Jordan-RoPE \(\eta_0=.010\) & \(1.970\pm0.012\) & \(0.097\pm0.019\) & \(0.0081\pm0.0000\) \\
Stabilized Jordan-RoPE \(\eta_0=.020\) & \(2.131\pm0.049\) & \(0.177\pm0.050\) & \(0.0176\pm0.0001\) \\
Stabilized Jordan-RoPE \(\eta_0=.050\) & \(2.275\pm0.044\) & \(0.219\pm0.035\) & \(0.0476\pm0.0001\) \\
\bottomrule
\end{tabular}
}
\end{table}

Tables~\ref{tab:high-jet-wikitext}, \ref{tab:high-jet-wikitext-norm-noalibi}, and \ref{tab:high-jet-wikitext-alibi} give the auxiliary high-jet WikiText matrices summarized in Table~\ref{tab:high-jet-wikitext-summary}.

\begin{table}[!htbp]
    \centering
    \caption{Auxiliary high-jet WikiText-103 subset run at length \(8192\), without adding ALiBi to the Jordan variants. Exact high-jet variants with \(c=1\) form a strong second tier, but RoPE+ALiBi remains best in this no-composition comparison.}
    \label{tab:high-jet-wikitext}
    \scriptsize
    \resizebox{\linewidth}{!}{\begin{tabular}{lcc}
\toprule
Method & Loss@8192 & $\Delta$ loss \\
\midrule
RoPE+ALiBi & \(2.1316\pm0.0126\) & \(-0.1631\pm0.0274\) \\
Exact \(m=3,c=1,\eta_0=.1\) & \(2.1734\pm0.0106\) & \(-0.1725\pm0.0122\) \\
Exact \(m=2,c=1,\eta_0=.002\) & \(2.1740\pm0.0145\) & \(-0.1845\pm0.0174\) \\
Exact \(m=4,c=1,\eta_0=.1\) & \(2.1933\pm0.0046\) & \(-0.1663\pm0.0144\) \\
Exact \(m=3,c=.1,\eta_0=.1\) & \(2.1945\pm0.0082\) & \(-0.1586\pm0.0359\) \\
Exact \(m=4,c=.1,\eta_0=.1\) & \(2.1994\pm0.0142\) & \(-0.1659\pm0.0164\) \\
Damped RoPE & \(2.3659\pm0.0244\) & \(-0.0118\pm0.0417\) \\
Stabilized Jordan-RoPE \(m=3\) & \(2.3799\pm0.0255\) & \(0.0132\pm0.0450\) \\
Stabilized Jordan-RoPE & \(2.3876\pm0.0127\) & \(0.0354\pm0.0417\) \\
RoPE & \(2.5184\pm0.0674\) & \(0.1589\pm0.0810\) \\
Direct-sum & \(2.5204\pm0.0156\) & \(0.1595\pm0.0425\) \\
Stabilized Jordan-RoPE \(m=4\) & \(2.5226\pm0.0682\) & \(0.1709\pm0.0523\) \\
\bottomrule
\end{tabular}
}
\end{table}

\begin{table}[!htbp]
    \centering
    \caption{Auxiliary normalized jet-spectrum WikiText-103 subset run at length \(8192\), without ALiBi. Normalization makes the coefficients a spectrum over jet orders rather than an unnormalized gain.}
    \label{tab:high-jet-wikitext-norm-noalibi}
    \scriptsize
    \resizebox{\linewidth}{!}{\begin{tabular}{lcc}
\toprule
Method & Loss@8192 & Notes \\
\midrule
Exact \(m=2,c=1.5,\eta_0=.010\) & \(2.0482\pm0.0040\) & best mean \\
Norm-jet \(m=3,\alpha\approx(.84,.11,.05)\) & \(2.0535\pm0.0091\) & learned, base-heavy \\
Norm-jet \(m=3,\alpha\approx(.69,.21,.10)\) & \(2.0590\pm0.0076\) & learned \\
Jetmix \(m=3,\alpha=(1,.10,.05)\) & \(2.0605\pm0.0029\) & unnormalized \\
Norm-jet fixed \(m=3,\alpha=(.70,.20,.10)\) & \(2.0607\pm0.0035\) & fixed simplex \\
Norm-jet fixed \(m=3,\alpha=(.50,.30,.20)\) & \(2.0622\pm0.0079\) & larger high-jet mass \\
Norm-jet \(m=4,\alpha\approx(.38,.31,.20,.10)\) & \(2.0827\pm0.0056\) & learned order-four \\
RoPE & \(2.6392\pm0.1091\) & no ALiBi \\
\bottomrule
\end{tabular}
}
\end{table}

\begin{table}[!htbp]
    \centering
    \caption{ALiBi-composed auxiliary WikiText run at length \(8192\). All rows use the same ALiBi bias.}
    \label{tab:high-jet-wikitext-alibi}
    \scriptsize
    \resizebox{\linewidth}{!}{\begin{tabular}{lcc}
\toprule
Method & Loss@8192 & Notes \\
\midrule
Norm-jet \(m=3,c=1.5,\alpha\approx(.69,.21,.10)\)+ALiBi & \(1.9759\pm0.0138\) & best mean \\
Jetmix \(m=3,c=1.5,\alpha=(1,.10,.05)\)+ALiBi & \(1.9805\pm0.0222\) & unnormalized \\
Exact \(m=2,c=1.5,\eta_0=.010\)+ALiBi & \(1.9812\pm0.0170\) & order-two control \\
Norm-jet fixed \(m=3,c=1.5,\alpha=(.50,.30,.20)\)+ALiBi & \(1.9822\pm0.0141\) & fixed simplex \\
Jetmixfull \(m=4,c=1.5,\alpha=(1,.50,.30,.20)\)+ALiBi & \(1.9877\pm0.0174\) & larger high-jet tail \\
RoPE+ALiBi & \(2.0037\pm0.0167\) & additive-bias baseline \\
\bottomrule
\end{tabular}
}
\end{table}

\section{Matched High-Jet Basis Control}
\label{app:matched-high-jet}

Table~\ref{tab:high-jet-matched-basis} repeats the fixed-basis high-jet check with damping matched across all targets:
\[
x^r\expe^{-0.1x}\cos(\omega d),\qquad r=1,2,3.
\]
This removes the finite-grid damping mismatch in the first row of Table~\ref{tab:high-jet-basis}.
With the matched first-jet target, the scaled \(m=2\) basis fits essentially exactly, while the second and third targets are fit by the expected \(m=3\) and \(m=4\) bases.

\begin{table}[!htbp]
    \centering
    \caption{Matched scaled high-jet fixed-basis controls at evaluation length \(8192\). The control column is the best \(R^2\) among matched RoPE and direct-sum polynomial controls.}
    \label{tab:high-jet-matched-basis}
    \scriptsize
    \resizebox{\linewidth}{!}{\begin{tabular}{lccccc}
\toprule
Matched scaled target & Min. order & Scaled \(m=2\) \(R^2\) & Scaled \(m=3\) \(R^2\) & Scaled \(m=4\) \(R^2\) & Best control \(R^2\) \\
\midrule
\(x e^{-.1x}\cos(\omega d)\) & 2 & 1.0000 & 1.0000 & 0.9989 & 0.2979 \\
\(x^2 e^{-.1x}\cos(\omega d)\) & 3 & 0.2908 & 1.0000 & 0.9995 & 0.0326 \\
\(x^3 e^{-.1x}\cos(\omega d)\) & 4 & 0.0396 & 0.3740 & 0.9997 & 0.0033 \\
\bottomrule
\end{tabular}
}
\end{table}

\FloatBarrier

\section{Scope of Claims}

The paper makes four deliberately narrow claims.
First, exact complex Jordan blocks are valid non-semisimple one-parameter positional representations when implemented with the contragredient query action.
Second, their primitive logit basis contains coupled modes \(d\cos(\omega d)\) and \(d\sin(\omega d)\).
Third, Stabilized Jordan-RoPE can help when a sequence task rewards those modes.
Fourth, order-\(m\) complex Jordan chains realize a finite frequency-jet basis \(\{d^r\expe^{i\omega d}\}_{r=0}^{m-1}\), and controlled high-jet probes show that the corresponding higher nilpotent chains can be useful when the target actually contains higher distance-modulated phase structure.
The MusicNet result and the ALiBi-composed WikiText jet-spectrum run are reported only as task-selective transfer evidence.
The normalized no-ALiBi WikiText ablation is a negative control: it shows that high-order jet spectra alone do not replace additive recency biases in this setting.
The composed result instead shows a small positive case for weak Jordan/jet corrections added on top of such a bias.
The paper does not claim that Stabilized Jordan-RoPE is an exact representation, that nilpotent positional encodings are new, or that the Jordan family dominates all existing positional encodings on natural language modeling.

\end{document}